\newcommand{\red}[1]{{\color{red} #1}}
\crefname{algorithm}{Algorithm}{Algorithms}
\crefname{clm}{Claim}{Claims}
\crefname{notation}{Notation}{Notations}
\crefname{theorem}{Theorem}{Theorems}
\crefname{claim}{Claim}{Claims}
\crefname{proposition}{Proposition}{Propositions}
\crefname{definition}{Definition}{Definitions}
\crefname{cor}{Corollary}{Corollaries}
\crefname{obs}{Observation}{Observations}
\crefname{example}{Example}{Examples}
\crefname{construct}{Construction}{Constructions}
\crefname{question}{Question}{Questions}
\crefname{remark}{Remark}{Remarks}
\renewcommand{\Cref}[1]{\cref{#1}}
\definecolor{ToDoColor}{rgb}{0.1,0.2,1}
\newcommand{\mafs}{{\sc{mafs}}}
\newcommand{\smafs}{{\sc{secure mafs}}}
\newcommand{\masas}{\textsc{mad-a*}}
\newcommand{\mastrips}{\textsc{ma-strips}}
\newcommand{\pp}{\textsc{pp-mas}}
\newcommand{\strips}{\textsc{strips}}
\newcommand{\pre}{\mathrm{pre}}
\newcommand{\eff}{\mathrm{eff}}
\newcommand{\PST}{\mathrm{PST}}
\newcommand{\ST}{\mathrm{ST}}
\newcommand{\algname}[1]{\mbox{\sc #1}}
\newcommand{\Sim}{\algname{Sim}}
\newcommand{\view}{\operatorname{\rm view}}
\newcommand{\set}[1]{\{#1\}}
\newtheorem{lemma}{Lemma}
\newtheorem{claim}{Claim}
\newtheorem{definition}{Definition}
\newtheorem{theorem}{Theorem}
\newcommand{\commentout}[1]{}
\begin{document}

\title{Privacy Preserving Multi-Agent Planning with Provable Guarantees}

\author{\name Amos Beimel \email beimel@cs.bgu.ac.il\\
       \name Ronen Brafman \email brafman@cs.bgu.ac.il \\
       \addr Ben-Gurion University of the Negev,\\
       Be'er Sheva, Israel}


\maketitle

\begin{abstract}
In privacy-preserving multi-agent planning, a group of agents attempt to cooperatively solve a multi-agent planning problem while maintaining private their data and actions. Although much work was carried out in this area in past years,
its theoretical foundations have not been fully worked out. Specifically, although algorithms with precise privacy guarantees
exist~\cite{Yao82b,GMW87}, even their most efficient implementations are not fast enough on realistic instances,
whereas for practical algorithms no meaningful privacy guarantees exist.
\smafs~\cite{Brafman15}, a variant of the multi-agent forward search algorithm~\cite{nissim2014distributed}  
is the only practical algorithm to attempt to offer more precise guarantees, but only in very limited settings and with proof sketches only. 
In this paper we formulate a precise notion of secure computation for search-based algorithms and prove that \smafs\ has this property in all domains. We also provide a proof of its completeness.
\end{abstract}

\section{Introduction}
\label{Introduction}

As our world becomes better connected and more open ended, and autonomous agents are no longer  science fiction, a need arises for enabling groups of  agents to cooperate in generating a plan for diverse tasks that none of them can perform alone, in a cost-effective manner. Indeed, much like ad-hoc networks, one would expect various contexts to naturally lead to the emergence of ad-hoc teams of agents that can benefit from cooperation. Such teams could range from groups of manufacturers teaming up to build a product that none
can build on their own, to groups of robots sent by different agencies or countries to help in disaster settings. To perform complex tasks, these agents need to combine their diverse skills effectively. 
Planning algorithms can  help  achieve this goal.

Most planning algorithms require full information about the set of actions and state variables in the domain. However, often, various aspects of this information are private to an agent, and it is not eager to share them. For example, the manufacturer is eager to let everyone know that it can supply motherboards, but it will not want to disclose the local process used to construct them, its suppliers, its inventory level, and the identity of its employees. 
Similarly, rescue forces of country A may be eager to help citizens of country B suffering from a tsunami, but without having
to provide detailed information about the technology behind their autonomous bobcat to country B, or to country C's humanoid evacuation robots. In both cases, agents have public capabilities  they are happy to share, and private processes and information that support these capabilities, which they prefer (or possibly require) to be kept private.

With this motivation in mind, a number of algorithms have recently been devised for distributed privacy-preserving planning~\cite{Bonisoli14,FMAP14,LB14,NBJAIR}. In these algorithms,  agents supply a public interface only, and through a distributed planning process, come up with a plan that achieves the desired goal without being required to share a complete model of their actions and local state with other agents. But there is a major caveat: it is well known from the literature on secure multi-party computation~\cite{Yao82b} that the fact that a distributed algorithm does not require an agent to {\em explicitly\/} reveal private information does not imply that other agents cannot deduce such private information from other information communicated during the run of the algorithm. Consequently, given that privacy is the raison-d'etre for these algorithms, it is important to strive to improve the level of privacy provided, and to provide formal guarantees of such privacy properties.

To the best of our knowledge, to date, there have been two attempts to address this issue. In~\cite{TozickaSK17}, the authors describe
a secure planner for multi-agent systems. However, as they themselves admit, this planner is impractical, as it requires computing all possible solutions. \cite{Brafman15} describes \smafs\, a modification of the {\sc multi-agent forward search} algorithm~\cite{nissim2014distributed} in which an agent never sends similar states. \smafs\ is an efficient algorithm. In fact, an implementation of it based on an equivalent macro sending technique~\cite{MaliahSB16} shows state of the art performance.  But it is not clear what security guarantees
it offers. While~\cite{Brafman15} provides some privacy guarantees, they are restricted to very special cases, and it seems
most plausible that \smafs\ is not secure in general.

The goal of this paper is to place the development of \smafs\ on firm footing by developing appropriate notions of privacy 
that are useful and realizable in the context of search algorithms, to characterize the privacy preserving properties
of \smafs\ and to provide rigorous proofs for its correctness and completeness. 
We define a notion of $\beta$-indistinguishable secure computation, and more specifically, we suggest a notion of PST-secure computation which is not as strong as that of strong privacy,
but is meaningful and more stringent than weak privacy. Roughly speaking, given a function $\beta$ on planning instances,
we say that an algorithm is $\beta$-indistinguishable if it will send the same messages during computation for any two
instances whose $\beta$ value is identical. PST-secure computation refers to the special case in which $\beta$ returns
a projected version of the search space -- one in which only the value of public variables is available.

The paper is structured as follows: First, we describe the basic model of privacy-preserving classical multi-agent planning.
Then, we discuss some basic notions of privacy. Next, we gradually develop more practical versions of PST-secure planning
algorithms, eventually describing an algorithm that is, essentially \smafs, and prove that the latter is sound and complete, and
is PST-secure.

\section{The Model}
\label{model}

\mastrips~\cite{Brafman200828}  is a minimal extension of \strips\ to multi-agent domains.
A \strips\ problem is a  4-tuple $\Pi = \langle P,A,I,G \rangle$, where 
\begin{itemize}
\item
$P$ is a finite set of primitive propositions, which are essentially the state variables; a {\em state\/} is a truth assignment to $P$.
\item
$I$ is the initial state. 
\item
$G$ is the set of goal states. 
\item
$A$ is a set of actions.
Each action $a$ has the form $a=\langle \pre(a),\eff(a) \rangle$, where  $ \pre(a)\subset P$ is the set of preconditions of $a$
and  $ \eff(a)$ is a set of literals, denoting the effects of action $a$. We use $a(s)$ to denote the state attained by applying $a$ in $s$. The state $a(s)$ is well defined iff $s\models \pre(a)$. In that case, $a(s)\models p$ (for $p\in P$) iff $p\in \eff(a)$ or $s\models p$ and $\neg p\not\in \eff(a)$.
\end{itemize}
A {\em plan\/} $\pi = a_1,\ldots, a_m$ is a solution to $\Pi$ iff $a_m(\cdots a_1(s)\cdots)\models G$.

An \mastrips\ problem is  a \strips\ problem in which the action set $A$ is partitioned among a set $\Phi=\{\varphi_i\}_{i=1}^{k}$ of
agents.  Formally, $\Pi = \langle P,\{A_i\}_{i=1}^{k},I,G \rangle$, where $P,I,G$ are as above, and $A_i$ is the set of actions of $\varphi_i$.

Work on privacy-preserving multi-agent planning seeks algorithms that generate good, or possibly optimal plans while not disclosing private information about their actions and the variables that they manipulate. For this to be meaningful, one has to first define what information is private and what information is not. Here we focus on the standard notion of private actions and private propositions. Thus, each action $a_i\in A_i$ is either {\em private} to agent $\varphi_i$ or {\em public}. Similarly, each proposition $p$ is either private to some agent $\varphi_i$ or public. To make sense, however, $p$ can be private to agent $\varphi_i$ {\em only\/} if $p$ does not appear in the description of an action $a_j\in A_j$ for $j\neq i$. Similarly, $a_i$ can be
private to  $\varphi_i$ only if all propositions in $a_i$'s preconditions are either public or private to $\varphi_i$
and all propositions in $a_i$'s 
effects are private to $\varphi_i$.

Hence, a {\em privacy preserving \mastrips\ problem} (\pp) is defined by as a set of
local planning problems:
$\Pi=\{\Pi_i : i=1,\ldots,k\}$
where 
$\Pi_i = \langle P_i^{\mathrm{prv}},P^{\mathrm{pub}},A_i^{\mathrm{prv}},A_i^{\mathrm{pub}},I_i,I^{\mathrm{pub}},G \rangle$. Here, $I^{\mathrm{pub}}$ is the value of $P^{\mathrm{pub}}$ in the initial state, and
the
goal is shared among all agents and
involves public propositions only. Furthermore, any action $a\in A_i^{\mathrm{prv}}$  involves private propositions only. 
We use $A_i$ to denote $A_i^{\mathrm{prv}}\cup A_i^{\mathrm{pub}}$.
A solution for a \pp\ problem is the sequence of all the public actions in a solution for the
\mastrips\ problem. 

We note that  a more refined notion of privacy was suggested in~\cite{,}. While we believe that the ideas discussed in this paper
can be extended to this setting, we leave this for future work.

Recall that in classical planning, we assume that the world state is fully observable to the acting agent and actions are deterministic. The multi-agent setting shares these assumptions, except that full observability is w.r.t.~the primitive propositions in 
$P_i^{\mathrm{prv}}\cup P^{\mathrm{pub}}$.

An issue that often arises is whether private goals should be allowed, or should all goals be public. Public goals make it easier for all agents to detect goal achievement, and have been assumed in most past work. As there is a simple reduction from private to public goals, albeit one that makes public the fact that all private goals of an agent have been achieved, we will maintain the assumption that all goal propositions are public.

Next, we define the notion of a \emph{public projection}. The \textit{public projection} $\pi_{\mathrm{proj}}(a)$ of an action
$a\in A_i$, $a=\langle \pre(a),\eff(a) \rangle$, is defined
as  $\pi_{\mathrm{proj}}(a)=\langle \{ p\in P^{\mathrm{pub}} | p\in\pre(a) \},\{\ell\in P^{\mathrm{pub}} | \ell\in\eff(a)\} \rangle$. That is, the same action, but with its private propositions removed.
Accordingly, $\pi_{\mathrm{proj}}(a)$ for $a\in A_i^{\mathrm{prv}}$ is empty. 
The \textit{public projection} $\pi_{\mathrm{proj}}(s)$ of a state is the partial assignment obtained by projecting $s$ to $P^{\mathrm{pub}}$.

Now, we define 
$\pi_{\mathrm{proj}}(\Pi)$, the
public projection of 
$\Pi=\{\Pi_i : i=1,\ldots,k\}$ to be 
the \strips\ planning problem:
$\langle P^{\mathrm{pub}},\{\pi_{\mathrm{proj}}(a):a\in A^{\mathrm{pub}}_i,1\leq i \leq k\},I^{\mathrm{pub}},G\rangle$.


The \emph{search-tree} induced by a planning problem plays a key role in our definition of privacy in distributed forward search planning.

\begin{definition}
The search tree associated with an MA planning problem $\Pi=\langle P,\{A_i\}_{i=1}^{k},I,G \rangle$, denoted by $\ST(\Pi)$, is a tree inductively defined below, where 
every node is labeled by a state and is either private to some agent or public, and every edge is labeled by an action.  
The root is labeled by $I$, and is public. The children of a node $v$ labeled by a state $s$ are defined as follows:
\begin{itemize}
\item
If $v$ is public, then for every $a$ applicable in $s$ there is a child labeled by  $a(s)$. 
\item
If $v$ is private to $\varphi_i$, then for every $a\in A_i$ applicable in $s$ there is a child labeled by  $a(s)$.
\item In both cases, the node $a(s)$ is public if $a$ is public, and $a(s)$ is  private to $\varphi_i$
if $a$ is private to $\varphi_i$.
\item The edge from $s$ to $a(s)$ is labeld by $a$.
\end{itemize}
We will also assume the existence of some lexicographic ordering over states which defines
the order of the children of a node. We assume that public variables appear before private variables in this order.
\end{definition}

Next we define a concept of the public projection of a search tree. First, we project all states into their public parts.
Then, we connect every public node to its closest public descendants, remove all private nodes, and remove duplicate
children in the resulting tree. Formally:
\begin{definition}
The \emph{public-projection of the search
tree of $\Pi$} (denoted $\PST(\Pi)$) is a  tree, defined below, whose nodes are labeled by assignments to the public variables of $\Pi$
and edges are labeled by public actions. 
Each node in $\PST(\Pi)$ corresponds to a list of public nodes in the search-tree $ST(\Pi)$, where the public states of all the nodes in the list are the public state of the node in $\PST(\Pi)$ (this list is used only to construct $\PST(\Pi)$ from $\ST(\Pi)$ and is not part of $\PST(\Pi)$).
The tree is inductively defined. 
\begin{itemize}
\item
The root of $\PST(\Pi)$ 
corresponds to the root of $\ST(\Pi)$ and is labeled by $I^{\mathrm{pub}}$. 
\item
Let $w$ be a node in $\PST(\Pi)$, with public state $s$, that corresponds to  public nodes $v_1,\dots,v_k$ in the search tree $\ST(\Pi)$. Denote the (public and private) states of $v_1,\dots,v_k$ by $s_1,\dots,s_k$ respectively. We define the children of  $w$
in two stages:
\begin{itemize}
\item
First, for every $i\in\set{1,\dots,k}$ and every public descendants  $v'$ of $v_i$ such that 
all internal nodes in the path from $v_i$  to $v'$ are private, i.e., 
the labels of the edges on the path from $v_i$ to $v'$ are actions $a_1,\ldots,a_\ell$ such that $a_1,\ldots,a_{\ell-1}$ are private actions and $a_\ell$ is a public action,
we construct a child $w'$.
We label the edge from $w$ to $w'$ by the last actions on this path, namely, by $a_\ell$.
The public state of $w'$ is the public state in $a_k(\cdots a_2(a_1(s_i)))$ and we associate 
$v'$ to $w'$.
\item
 We remove duplicated children. That is, if $w_1$ and $w_2$ are children of $w$ such that the actions labeling the edges $(w,w_1)$ and $(w,w_2)$ are the same and the public states of $w_1$ and $w_2$ are the same, then we merge $w_1$ and $w_2$ and associate all the nodes associated to them to the merged node. We repeat this process until there are no children that can be merged. 
\end{itemize}
\end{itemize}
\end{definition}

\section{Privacy Guarantees}
The main property of interest from a solution algorithm to
a \pp\ planning problem, aside from soundness and completeness, is the level of privacy it preserves.
The main privacy-related question one asks regarding a \pp\ algorithm  is whether coalitions of agents participating in the planning algorithm will be able to gain information about the private propositions and actions of other agents.

In what follows we work under the following assumptions:
\begin{itemize}
\item Agents are {\em honest, but curious\/}. This is a well known assumption in secure multi-party computation (see, e.g.,~\cite{LindellP10}).
According to this assumption, which we believe applies to many real-world interactions among business partners and ad-hoc teams,
the agents perform the algorithm as specified, but are curious enough to collude and try to learn what they can about the other agents without acting maliciously. (Alternatively, consider malicious agents that  eavesdrop on the communication among agents, but are not part of the team, so they cannot intervene.) 
\item The algorithm is synchronous. That is agent operate with a common clock, and send messages in rounds and
these messages are immediately delivered without corruption or delay.
\item Perfect security, that is, even an unbounded adversary cannot learn any additional information beyond the leakage function (defined below).
\end{itemize}

To date, most work was satisfied with algorithms that never explicitly expose private information, typically by encrypting this information prior to
communicating it to other agents. Consequently, we say that an algorithm is {\em weakly private\/} if the names of private actions and
private state variables and their values are never communicated explicitly.

However, the fact that information is not explicitly communicated is not sufficient. Consider, for example an algorithm in which agents share with each other their complete domains, except that the names of private actions and state variables are obfuscated by (consistently) replacing each with some
arbitrary random string. This satisfies the requirement of weak privacy, but provides the other agents with a complete model that is isomorphic to the real model. For example, imagine a producer who expects exclusivity from its suppliers. With this scheme, the producer will not know the real names of other customers of its suppliers,
but it will certainly learn of their existence. Similarly, a shipping company may not want to have others learn about the size of its fleet, or the number of workers it employs.

At the other extreme we have {\em strong privacy\/}. We say that an algorithm is {\em strongly private\/} if no coalition of agents can deduce from the information
obtained during a run of this algorithm any information that it cannot deduce from the public projection of the planning problem,
the private information the coalition has (i.e., the initial states and the actions of the agents in the coalition), and
the public projection of ``its solution''. As we are considering search problems, where many 
solutions can exists, the traditional privacy definition for functions does not apply. The problem is that the solution chosen by the algorithm can leak information (e.g., 
an algorithm that returns the lexicographically first solution leaks no previous solutions exists). See \cite{BeimelCNW08} for a discussion on this problem and a suggestion of a definition of privacy for search problems.

Furthermore, strong privacy is likely to be very difficult to achieve and to prove unless stronger cryptographic methods are introduced. With
the latter, it will be
possible to develop algorithms that are strongly private, but, at least
with our current knowledge, this is likely to come at substantial computational cost that will render them not practical for the size of inputs we would like to consider.
Weak privacy, on the other hand, seems too weak in most cases, and provides no real guarantee, as it is not clear what information is deducible from the algorithm. 

Given this state of affairs, where in the existing algorithms strong privacy is not as practical as desired, whereas weak privacy tells us little, if anything, about the information that might be leaked, it is
important to provide tools that will specify the privacy guarantees of existing and new algorithms.
Here we would like to suggest a type of privacy ``lower-bound'' in the form of an indistinguishability guarantee. More specifically, given a function $\beta$ defined on planning domains, we say that an algorithm is \emph{$\beta$-indistinguishable},
if a coalition of agents participating in the planning algorithm  solving a problem $\Pi$ cannot distinguish between the current domain and any other domains $\Pi'$ such that $\beta(\Pi) = \beta(\Pi')$. We provide two equivalent definitions of privacy. 

We define the view of the of a set of agents $T$, denoted $\view_T(x)$, 
in an execution of a deterministic algorithm with inputs $x=(x_1,\dots,x_n)$
as all the information it sees during the execution,
namely,  
the inputs of the agents in $T$ (namely, $(x_i)_{i\in T}$)
and the messages exchanged during the execution of the algorithm.

\begin{definition}
\label{def:ind-dist}
Let $\beta:\set{0,1}^*\rightarrow \set{0,1}^*$ be a (leakage) function. We say that a deterministic algorithm is \emph{$\beta$-indistinguishable} if for every set $T$ of agents and for every two inputs
$x=(x_1,\dots,x_n)$ and $y=(y_1,\dots,y_n)$ such that $x_i=y_i$ for every $i \in T$ and $\beta(x)=\beta(y)$ the view of $T$ is the same,
i.e., $\view_T(x)=\view_T(y)$.
\end{definition}

\begin{definition}
\label{def:ind-sim}
Let $\beta:\set{0,1}^*\rightarrow \set{0,1}^*$ be a (leakage) function. We say that a deterministic algorithm is \emph{$\beta$-indistinguishable} if there exists a simulator $\Sim$ such that for every set $T$ of agents and for every input
$x=(x_1,\dots,x_n)$ the view of  $T$ is the same as the output of the simulator that is given $(x_i)_{i\in T}$ and $\beta(x)$, i.e., $\Sim(T,(x_i)_{i\in T},\beta(x))=\view_T(x)$.
\end{definition}

In \cref{def:ind-sim}, the simulator is given the inputs of the agents in $T$ and $\beta(x)$ -- the output of the leakage function applied to the inputs of all agents. The simulator is required to produce all the messages that were exchanged during the algorithm.
If such simulator exists, then all the information that the adversary can learn from the execution of the algorithm is implied by the inputs of the parties in $T$ and $\beta(x)$.

\begin{claim}
The two definitions are equivalent.
\end{claim}
\begin{proof}
Assume that an algorithm is $\beta$-indistinguishable according to \cref{def:ind-sim}. 
Let 
$x=(x_1,\dots,x_n)$ and $y=(y_1,\dots,y_n)$ be two inputs
such that $x_i=y_i$ for every $i \in T$ and $\beta(x)=\beta(y)$.
Thus, 
$\Sim(T,(x_i)_{i\in T},\beta(x))=\Sim(T,(y_i)_{i\in T},\beta(y)).$
Therefore, by \cref{def:ind-sim},
$\view_T(x)=\Sim(T,(x_i)_{i\in T},\beta(x))=\Sim(T,(y_i)_{i\in T},\beta(y))=\view_T(y)$.

Assume that an algorithm is $\beta$-indistinguishable according to \cref{def:ind-dist}. Let 
$x=(x_1,\dots,x_n)$ be any input. We define a simulator for the algorithm.
Given $T,(x_i)_{i\in T},\beta(x)$ we construct a simulator $\Sim$ as follows:
\begin{itemize}
\item
Finds inputs $(y_i)_{i \notin T}$
such that $\beta(y)=\beta(x)$, where $y_i=x_i$ if $i \in T$.
\item
Outputs $\view_T(y)$. 
\end{itemize}
By \cref{def:ind-dist}, $\view_T(x)=\view_T(y)$, thus,
$\Sim(T,(x_i)_{i\in T},\beta(x))=\view_T(x)$, as required in \cref{def:ind-sim}.
\end{proof}

Note that the simulator is not given the output of the function computed by the algorithm, information that is implied by the messages exchanged in the algorithm. 
The simulator can compute the view of $T$, hence the output, from the information it gets.
This implies that the leakage $\beta(x)$ (together with $(x_i)_{i\in T}$) determines the output of the algorithm. This is an important feature of our definition, as we consider search problems where there can  be many possible outputs. The output that an algorithm returns might leak information on the inputs (see~\cite{BeimelCNW08}), and it is not clear how to compare the privacy provided by two algorithms returning different solutions. Our definition bypasses this problem as it explicitly specifies the leakage.   

In this paper, we will focus on a particular function $\beta$ that returns the public projection of the
problem's search tree. That is, the algorithms we will consider will have the property that a set of agents
cannot distinguish between two problem instances whose public projection and their PST are identical.
We will refer to this as {\em PST-indistinguishable security}.

A recently proposed example of privacy w.r.t.\ a class of domains is {\em cardinality preserving privacy}~\cite{MaliahSS17} where the idea is that agents cannot 
learn the number of values of a some variable, such as the number of locations served by a track.
(Defining this formally requires using multi-valued variable domains.)
Another notion of privacy recently introduced is \emph{agent privacy}~\cite{FaltingsLP08} in which agents are not aware
of other agents with whom they do not have direct interactions -- i.e., agents that require or affect
some of the variables that appear in their own actions.  This notion is more natural when such interactions
are explicitly modelled using the notion of subset-private variables~\cite{Bonisoli14}.
These notions seem more ad-hoc and weaker than our definition of privacy.
 We will not discuss these notions in this paper.

\section{A PST-Indistinguishable Algorithm}

The goal of this section is to show that \smafs\ is PST-Indistinguishable. We will do it by gradually refining a very simple (and inefficient) algorithm to obtain an algorithm that is essentially identical to \smafs, which, as shown by~\cite{MaliahSB16}, is quite efficient in practice, and thus the first algorithm to be both practical and have clear theoretical guarantees. 
This gradual progression will make the proofs and ideas simpler.

\subsection{A Simple Algorithm}
We start with a very simple algorithm, which we shall call PST-Forward Search.
The algorithm simply constructs $\PST(\Pi)$ -- the public-projection of the search
tree of $\Pi$. 
The search progresses level by level in the public-projection of the  search tree. In a given level of the tree, each agents $\varphi_i$: (1) computes the children of all the nodes in $\PST(\Pi)$, where a child of a node results from a sequence of private actions followed by a single public action by the agent, 
and (2) sends the public state of each child (as well as a description of the path to the child) to all other agents (removing duplicates). 
The PST-Forward Search algorithm is described in \cref{alg:simple-search}. 
In this algorithm, the agents maintain a set $Q_{d}$ for every level $d$ in the tree, 
which will contain all nodes in level $d$.
Every element in the set is a node represented as a pair $(\vec{s},\vec{a})$, where $\vec{s}=(s_0,\dots,s_m)$ is a sequence of public states such that $s_0=I^{\mathrm pub}$ and $\vec{a}=(a_1,\dots,a_m)$ is a sequence of public actions.
Such a pair describes a path in the PST from the root to the node in level $d$.
To find the actions that an agent can apply from a node, it needs to compute the possible private states of that node,
as this information is not contained in the message it received. To do this\red{,} the agent reconstructs its private
state, as described in Algorithm {\bf compute-private-states}. This is, of course, highly inefficient, but has the
desired privacy property.


\begin{algorithm}
\caption{PST Forward Search}
    \label{alg:simple-search}
\begin{algorithmic}[1]  
\STATE {\bf initialization:} $d \gets 0$; for $i \in \set{1,\dots,n}$ set $Q_{0}=\set{(I^{\mathrm{pub}},\epsilon)}$.\\
// $Q_d$ will contain the states at level $d$ of the PST. Each agent maintains a copy of it.
\WHILE{goal has not been achieved}
   \STATE $d\gets d+1$; for every $i \in \set{1,\dots,n}$ agent $\varphi_i$ sets $Q_{d}\gets\emptyset$ and $C_i\gets \emptyset$.
   \FOR{$i=1$ \TO $n$ }
	    \STATE Agent $\varphi_i$ does the following:
      \FORALL{$(\vec{s},\vec{a})\in Q_{d-1}$}
         \STATE let $s$ be the last state in $\vec{s}$.
         \STATE executes $PS\gets $\textbf{ compute-private-states}$(i,\vec{s},\vec{a})$.
				
  	     \FORALL{private state $ps \in PS$}
				\FORALL{sequence $a_1,\dots,a_\ell$ of actions of $\varphi_i$ applicable from $s,ps$, where $a_1\dots,a_{\ell-1}$ are private and $a_\ell$ is public }
		        \STATE computes $(s',ps') \gets a_\ell(a_{\ell-1}(\cdots a_1((s,ps))))$ and $C_i \gets C_i \cup \set{((\vec{s},s'),(\vec{a},a_\ell))}.$
		     
	       \ENDFOR
				\ENDFOR
	    \ENDFOR
			\STATE sends $C_i$ to all agents (where the elements of $C_i$ are sent according to some canonical order).
		  \STATE each agent $\varphi_j$ updates its copy: $Q_{d}\gets Q_{d} \cup C_i$.
      \IF{the last state $s'$ in some $((\vec{s},s'),(\vec{a},a_\ell)) \in C_i$ satisfies the goal}
				\STATE	all agents output $(\vec{a},a_\ell)$ and halt. 
			\ENDIF
	\ENDFOR
\ENDWHILE
\end{algorithmic}
\end{algorithm}

\begin{algorithm}
\caption{compute-private-states$(i,\vec{s}=(s_0,\dots,s_m),\vec{a}=(a_1,\dots,a_m))$}
    \label{alg:CompPrivate}
\begin{algorithmic}[1]
\STATE\COMMENT{The algorithm reconstructs the possible private states of agent $\varphi_i$
			  starting from $I^{\rm pub},I_i$ and updating it according to the states is $\vec{s}$ and the actions of $\varphi_i$ in $\vec{a}$.} 
\STATE let $PS_0\gets\set{I_i}$.
   \FOR{$j=1$ \TO $m$}
	 \IF{$a_j$ is not an action of $\varphi_i$}
	    \STATE{$PS_j\gets PS_{j-1}$.}
			\ELSE
			\STATE $PS_j\gets \emptyset$.
			\FORALL{$ps \in PS_{j-1}$ and  sequence of private actions 
			$a'_1,\dots,a'_{\ell}$ in $A_i$ such that $a'_1,\dots,a'_{\ell},a_j$ is applicable from $s_{j-1},ps$}
			  \STATE let $(s',ps') \gets a_j(a'_{\ell}(\cdots a'_1((s_j,ps))))$.
				\red{
				\IF{$s'=s_j$}
					\STATE{$PS_j\gets PS_j\cup \set{ps'}$.}
				\ENDIF
				}
			\ENDFOR
	 \ENDIF
   \ENDFOR
	\RETURN{$PS_m$.}
\end{algorithmic}
\end{algorithm}


In  \cref{alg:simple-search}, the messages sent correspond exactly to the PST nodes, 
and therefore, two domains with an identical PST will yield identical messages. To enable an exact simulation, we need to specify the order in which each agent sends the possible sequences of children in a given level; we assume that this is done in some canonical order. We supply the formal proof of privacy in the next claim.
\begin{claim}
\cref{alg:simple-search}, the simple private search algorithm, is a PST-indistinguishable secure algorithm.
\end{claim}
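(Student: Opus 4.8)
**

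The goal is to show Algorithm~\ref{alg:simple-search} is PST-indistinguishable, meaning I must exhibit a simulator as in \cref{def:ind-sim}, where the leakage function $\beta$ returns the public projection of the planning problem together with its PST. The plan is to prove the claim via the simulator definition (\cref{def:ind-sim}) rather than the two-input definition, since the algorithm's messages are in manifestly one-to-one correspondence with the PST nodes, and a simulator can simply read these nodes off from $\beta(\Pi)$. Concretely, I would construct a simulator $\Sim$ that, given $T$, the inputs $(x_i)_{i\in T}$ of the coalition, and $\beta(\Pi) = (\pi_{\mathrm{proj}}(\Pi), \PST(\Pi))$, reconstructs the entire message transcript level by level.

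The key steps are as follows. First I would argue by induction on the level $d$ that the set $Q_d$ maintained by the agents during a real run of the algorithm is exactly the set of nodes at level $d$ of $\PST(\Pi)$, each represented as a path $(\vec{s},\vec{a})$ from the root. The base case is immediate since $Q_0 = \{(I^{\mathrm{pub}},\epsilon)\}$ corresponds to the root of the PST. For the inductive step, I would verify that the children an agent $\varphi_i$ computes in lines 6--15 --- namely states reached by a sequence of its private actions followed by one public action, with duplicates removed --- match precisely the two-stage child construction in the definition of $\PST(\Pi)$ (first enumerating public descendants reached through private paths, then merging duplicates). Second, the only messages exchanged are the sets $C_i$ sent in line~16, and these are sent in a fixed canonical order; by the induction each $C_i$ consists exactly of the edges of $\PST(\Pi)$ contributed by agent $\varphi_i$ at level $d$. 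Therefore the entire transcript is a deterministic function of $\PST(\Pi)$ alone. The simulator can thus enumerate the levels of $\PST(\Pi)$, attribute each edge to the agent whose public action labels it, emit the corresponding $C_i$ in the canonical order, and halt at the level where a goal state first appears --- all of which is determined by $\beta(\Pi)$ without any knowledge of private inputs.

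I would then confirm that the simulated view matches the real view: the inputs $(x_i)_{i\in T}$ are given to $\Sim$ directly, and the messages it produces equal the real messages by the correspondence just established. One subtlety to address is that the output of the algorithm (the returned plan) and the termination level are also part of the view, but these too are fixed by $\PST(\Pi)$, since goal propositions are public and goal achievement is detectable in the public projection; thus no extra leakage is incurred.

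The main obstacle I expect is the inductive step identifying the agents' locally computed children with the PST child-construction, because of a representational mismatch: the agents operate on paths $(\vec{s},\vec{a})$ and reconstruct private states via \textbf{compute-private-states}, whereas $\PST(\Pi)$ is defined through lists of associated nodes in $\ST(\Pi)$. I must check that \textbf{compute-private-states}$(i,\vec{s},\vec{a})$ returns exactly the set of private states of $\varphi_i$ consistent with the public path $(\vec{s},\vec{a})$ --- i.e.\ that it faithfully recovers the private-state components of all $\ST(\Pi)$-nodes associated to the PST node --- so that the enumerated children coincide with the PST children and the duplicate-removal step mirrors the merging step in the PST definition. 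Establishing this correspondence carefully, including that no private information beyond what the PST already encodes is revealed by the choice of canonical order, is the crux of the argument.
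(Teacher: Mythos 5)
Your proposal is correct and follows essentially the same route as the paper: exhibit a simulator (per \cref{def:ind-sim}) that reads the message transcript off $\PST(\Pi)$ level by level, attributing each level-$d$ node to the agent whose public action labels its incoming edge and emitting the corresponding $C_i$ in the fixed canonical order. The paper's proof is just this simulator stated tersely (relying on the observation that messages correspond exactly to PST nodes); your added detail---the induction on levels, the verification that \textbf{compute-private-states} recovers exactly the private states consistent with the public path, and the check that termination and output are PST-determined---merely fills in what the paper leaves implicit.
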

\begin{proof}
The simulator, given the PST $T$, traverses the tree level by level, 
in each level $d$ it goes over all agents $\varphi_i$ starting from $\varphi_1$ and ending at 
$\varphi_n$, and for each agent $\varphi_i$ it sends the nodes of level $d$ resulting from an action of $\varphi_i$,  where for each node it sends the public states and the actions on the path from the root to the node. The order of sending the nodes is as in the algorithm, according to the fixed canonical order.  
\end{proof}

\subsection{Using IDs}

%

Next, we present an optimization of \cref{alg:simple-search}, which eliminates the need to compute private states, and merges some nodes in the tree, reducing the communication complexity of the algorithm. 
We call this version: PST-ID Forward Search.

Notice that only actions of $\varphi_i$ change the local state of $\varphi_i$. 
There are two approaches to use this observation.
In one approach,  for each node that is sent, the agent sending the node can locally keep  a list containing its possible local states in that node. When an agent wants to  compute  the children of some node, it looks for its last action in the path to the node and retrieves its possible local states after that action. 
%
In the second approach, which we use, each agent associates the possible local states with a unique id and keeps the possible local states associated with this id.  Each time  an agent sends a node in the tree, it sends the public state of the node as well as the $n$ ids, encoding the local states of each agent. Notice that each id is not a function of these local states, but only of the particular PST node with which it is associated. When an agent wants to compute the children of a node resulting from its actions, it does the following: 
\begin{itemize}
\item
It retrieves all private states associated with its id in this state.  
\item
It expands the public state  and each possible private state using all possible actions sequences containing and ending with a single public action.
\item
For every  node reached, it generates a new id and associates
with it its local state in the states generated with this projected state,
keeping the ids of all other agents associated with the original node.
\item 
It orders the nodes based on some lexicographic order.
\item
It sends these nodes,  with their public states and their associated ids, in this order to all agents. 
\end{itemize}

Note that the above algorithm sends at each stage a vector consisting of a public state and an id for
each agent. As this id encodes the private state(s) of the agent, we can think of the message as representing the state, with its private components encoded.  The agent does not need to send neither the actions leading to the new node nor the father of the new node. Furthermore,
if two (or more) children  of a node have the same public state, the agent does not need to send them twice;
it can send one public state, together with the ids of the other agents taken from the original node, and one new id for the agent associated with  all its possible private states associated with any one of these children.  We go one step further,  merging all nodes generated by an agent in level $d$ (possibly with different fathers) if they have the same public state and the same ids for all other agents.

The formal description of the algorithm appears in \cref{alg:less-simple-search}.
The algorithm that recovers a solution after the goal has been reached is described in \cref{alg:recover-solution}.
 
\begin{algorithm}
\caption{PST-ID Forward Search}
    \label{alg:less-simple-search}
\begin{algorithmic}[1]  
\STATE {\bf initialization:} $d \gets 0$; for every $i \in \set{1,\dots,n}$ agent $\varphi_i$ sets  $id_i\gets 0$, $Q_{0}\gets \set{(I^{\mathrm{pub}},0,\dots,0)}$, and $PS_{i}[0]\gets \set{I_i}$. \\
//$PS_{i}[j]$ denotes the local states $\varphi_i$ associated with the id $j$.
\WHILE{goal has not been achieved}
   \STATE $d\gets d+1$; for every $i \in \set{1,\dots,n}$ agent $\varphi_i$ sets $Q_{d}\gets \emptyset$ 
	and $E_i\gets \emptyset$.
   \FOR{$i=1$ \TO $n$ }
	    \STATE agent $\varphi_i$ does the following:
      \FORALL{$(s,j_1,\dots,j_n)\in Q_{d-1}$}
            \FORALL{private state $ps\in PS_{i}[j_i]$ }
  	        \FORALL{sequence $a_1,\dots,a_\ell$ of actions of $\varphi_i$  applicable from $s,ps$, where $a_1\dots,a_{\ell-1}$ are private and $a_\ell$ is public }
		           \STATE $(s',ps') \gets a_\ell(a_{\ell-1}(\cdots a_1((s,ps))))$ 
							 \STATE $E_i\gets E_i \cup \set{(s',j_1,\dots,j_{i-1},j_{i+1},\dots,j_n,ps')}$.
						\ENDFOR
	       \ENDFOR
			\ENDFOR
			\STATE agent $\varphi_i$ sorts the elements of $E_i$, first by the public state,  then by the $n-1$ ids, and then by the private state. Let $((s^1,j_1^1,\dots,j^1_{i-1},j^1_{i+1},\dots,j_n^1,ps^1)$ 
			$\dots,(s^t,j_1^t,\dots,j^t_{i-1},j^t_{i+1},\dots,j_n^t,ps^t))$ be the sorted elements of $E_i$.
			\FOR{$u=1$ \TO $t$}
			   \IF{$u > 1$ \AND $s^{u-1} =s^u $ \AND $(j_1^{u-1},\dots,j^{u-1}_{i-1},j^{u-1}_{i+1},\dots,j^{u-1}_n)=
				(j_1^u,\dots,j^u_{i-1},j^u_{i+1},\dots,j_n^u)$}
				    \STATE $PS_{i}[id_i]\gets PS_{i}[id_i] \cup \set{ps^u}$.
				 \ELSE
				    \STATE $id_i\gets id_i+1$.
						\STATE $C_i\gets C_i \cup \set{(s^u,j_1^u,\dots,j^u_{i-1},id_i,j^u_{i+1},\dots,j_n^u)}$ and $PS_{i}[id_i]\gets \set{ps^u}$.
				 \ENDIF
			\ENDFOR
			\STATE $\varphi_i$ sends $C_i$ to all agents (where the elements of $C_i$ are sent according to some canonical order).
		  \STATE each agent $\varphi_j$ updates: $Q_{d}\gets Q_{d} \cup C_i$.
		\ENDFOR
		\IF{the state $s$ in some element in $Q_{d}$ satisfies the goal}
				\STATE the agents execute $sol \gets $ {\bf recover-solution},   output $sol$,  and halt.
    \ENDIF 
\ENDWHILE
\end{algorithmic}
\end{algorithm}

\cref{alg:recover-solution} described below returns a solution to the planning problem, i.e., a sequence of public actions on a path from the root of the PST to a node in level $d$ that satisfies the goal. 
Clearly, this sequence of actions should be computed from the information computed by the algorithm so far.  
Furthermore, to guarantee privacy, this sequence of actions should be determined by the PST (that is, a simulator can generate it from the PST). In \cref{alg:recover-solution} we choose it in a specific way that is fairly efficient (especially, if the agents keep additional information during \cref{alg:less-simple-search}).

In \cref{alg:recover-solution},  we say that 
$s_{d'-1},j_{1,{d'-1}},\dots,j_{n,{d'-1}} \in Q_{d'-1}$ 
leads to $s_{d'},j_{1,{d'}},\dots,j_{n,{d'}} \in Q_{d'}$ by agent $\varphi_i$ if
there exist private states $ps_{d-1}\in PS_{i}[j_{d'-1}],ps_d\in PS_{i}[j_{d'}]$, and a sequence of actions $a_1,\dots,a_\ell$ of agent $\varphi_i$ such that $a_1,\dots,a_{\ell-1}$ are private and $a_\ell$ is public and 
$a_1,\dots,a_{\ell}$ are applicable from $s_{d'-1},ps_{d'-1}$ and lead to $s_{d'},ps_{d'}$.

\begin{algorithm*}
\caption{recover-solution}
    \label{alg:recover-solution}
\begin{algorithmic}[1]
\STATE let $s_d,j_{1,d},\dots,j_{n,d}$ be the first element in $Q_{d}$ that satisfies the goal.
\STATE \COMMENT{recall that all agents have a copy of $Q_d$.} 
\FOR{$d'=d$ {\bf downto } 1 }
\STATE let $\varphi_i$ be the agent performing the last action leading to $s_{d'},j_{1,{d'}},\dots,j_{n,{d'}}$.
\STATE agent $\varphi_i$ finds the first element  $s_{d'-1},j_{1,{d'-1}},\dots,j_{n,{d'-1}} \in Q_{d'-1}$ 
leading to $s_{d'},j_{1,{d'}},\dots,j_{n,{d'}}$.
\STATE let $a_{d'}$ be the last action in  a sequence of actions 
leading from  $s_{d'-1},j_{1,{d'-1}},\dots,j_{n,{d'-1}}$ to $s_{d'},j_{1,{d'}},\dots,j_{n,{d'}}$
(if there is more than one such action, choose the lexicographically first action).
\STATE agent $\varphi_i$ sends $s_{d'-1},j_{1,{d'-1}},\dots,j_{n,{d'-1}}$ and $a_{d'}$ to all other agents. 
\ENDFOR
\RETURN $a_1,\dots,a_d$.
\end{algorithmic}
\end{algorithm*}

\begin{claim}
\label{c:less-simple}
\cref{alg:less-simple-search}, the PST-ID Forward Search algorithm, is a PST-indistinguishable secure algorithm.
\end{claim}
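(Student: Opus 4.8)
The plan is to build a simulator meeting \cref{def:ind-sim} for $\beta=\PST$: given the coalition $T$, its inputs $(x_i)_{i\in T}$, and $\beta(x)=\PST(\Pi)$, it must output exactly $\view_T(x)$, which here consists of the coalition's inputs together with the broadcast sets $C_i$ of every round of \cref{alg:less-simple-search} and the messages of \cref{alg:recover-solution}. The simulator copies the given inputs into the view verbatim and reconstructs all broadcasts from $\PST(\Pi)$ alone, never consulting the private states of agents outside $T$ (indeed, it needs no private states at all for the broadcasts). The whole proof rests on an invariant, established by induction on the level $d$, stating that the contents of $Q_d$---each a public state paired with a vector of $n$ ids---and the broadcast $C_i$ are a deterministic function of $\PST(\Pi)$ and the fixed lexicographic and canonical orderings, and in particular are independent of the private states held by any agent.

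For the inductive step I would carry along a correspondence between the id-vectors in $Q_d$ and the nodes of $\PST(\Pi)$ at depth $d$. Assuming $Q_{d-1}$ agrees with the simulator's copy, consider agent $\varphi_i$ expanding some $q\in Q_{d-1}$: granting the correctness of the private-state tables (addressed below), the set of public states of the children it produces equals the set of public states labeling the children, reached by a public action of $\varphi_i$, of the PST node(s) that $q$ represents (the definition of $\PST(\Pi)$ already compresses the intervening private actions into a single edge). This set is read straight off $\PST(\Pi)$, with no need to know how many private-state expansions realize each child. Since only $\varphi_i$ acts, every element it generates keeps the foreign ids of $q$ unchanged; hence, as $q$ ranges over $Q_{d-1}$, the merge key used in \cref{alg:less-simple-search}---a public state together with the $n-1$ foreign ids---ranges over a set fixed by the PST children and by the inductively PST-determined foreign ids. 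The algorithm then sorts $E_i$, collapses equal merge keys (which may come from different fathers), and hands a fresh $id_i$ to each surviving key in the order of the primary and secondary sort fields; the simulator performs this identical bookkeeping on $\PST(\Pi)$, yielding the same $C_i$ and the same $Q_d$.

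The step I expect to be the crux, and would argue most carefully, is that the broadcasts are genuinely insensitive to the private information the simulator lacks. Two facts must be pinned down. First, although $E_i$ is sorted with the private state as a tertiary key and although each $PS_i[\cdot]$ accumulates private states, the ids that appear in $C_i$ depend only on the set of distinct merge keys and their primary/secondary order: all elements sharing a merge key receive one and the same $id_i$ regardless of how their private states are interleaved, so neither the tertiary key nor the hidden contents of $PS_i[\cdot]$ ever reach a message. Second, I must verify that the children the algorithm actually generates coincide with the PST children of the corresponding node, i.e.\ that $PS_i[j_i]$ holds precisely the private states of $\varphi_i$ reachable along the path that $j_i$ encodes; this correctness statement I would prove by a parallel induction, matching the inner ``private actions then one public action'' loops against the closest-public-descendant construction in the definition of $\PST(\Pi)$. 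Together these show that two instances with the same PST induce the same broadcast structure---a further merging of $\PST(\Pi)$, computable from it---so the messages coincide.

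Finally I would handle \cref{alg:recover-solution}, whose messages are elements of the sets $Q_{d'}$ and public action labels $a_{d'}$. Each choice it makes---the first goal element of $Q_d$, the first element of $Q_{d'-1}$ ``leading to'' a given element, and the lexicographically first connecting action---is phrased purely in terms of public states, id-vectors, and public action labels. By the invariant these are PST-determined, and the ``leading to'' relation corresponds exactly to the presence of a PST edge labeled by a public action of $\varphi_i$; thus the simulator recovers these messages from $\PST(\Pi)$ as well. Combining both phases gives $\Sim(T,(x_i)_{i\in T},\beta(x))=\view_T(x)$, which is precisely \cref{def:ind-sim} for $\beta=\PST$, establishing \cref{c:less-simple}.
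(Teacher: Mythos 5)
Your proposal is correct and takes essentially the same approach as the paper: the paper's proof likewise constructs a simulator that traverses the PST level by level, maintains an id-vector label $L(w)$ for each node (updating only the acting agent's coordinate and merging nodes that agree on the public state and the other agents' ids), and argues by induction that its messages---including those of the solution-recovery phase---coincide with the algorithm's broadcasts. The only difference is presentational: the paper gives the simulator as explicit pseudocode and dismisses the induction as easily proved, whereas you spell out the two supporting invariants (that merge keys and their sort order determine the ids appearing in messages, and that the tables $PS_i[\cdot]$ contain exactly the reachable private states), which is exactly the content that induction needs.
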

\begin{proof}
We construct a simulator proving that \cref{alg:less-simple-search} is a PST-indistinguishable secure algorithm. We first supply a high level description of the simulator.
The simulator, given the PST $T$, traverses the tree level by level and simulates the algorithm. For some level $d$, it goes over the agents from agent $\varphi_1$ to $\varphi_n$ and for each agent it produces a list $C_i$ as the agent would have sent, using the nodes in 
level $d$ resulting from an action of $\varphi_i$. Recall that each element in $C_i$ is a public state and a list of $n$ ids. To produce these ids (and to know which nodes should be merged), for every vertex $w$ in level $d$  the simulator computes a label, denoted by $L(w)$, that contains $n$ ids; this label is computed using the label of the father of a node $w$, denoted by $f(w)$. The labels of $w$ and $f(w)$ are the same except for the $i$th id, which is carefully computed to simulate \cref{alg:less-simple-search}.
After reaching  the first level in which there is a node satisfying the goal, the simulator, using the PST tree, reconstructs the solution that  
\Cref{alg:recover-solution} returns.

The simulator is formally described  in \cref{sim:less-simple-search}.
The input in \cref{sim:less-simple-search} is a PST $T$\red{;} we denote its root  by $root$.
It can be easily proved by induction that the simulator computes the same messages as \cref{alg:less-simple-search}.
\end{proof}

\begin{algorithm}
\caption{Simulator for \cref{alg:less-simple-search} -- The  PST-ID Forward Search Algorithm}
    \label{sim:less-simple-search}
\begin{algorithmic}[1]  
\REQUIRE A PST tree $T$
\STATE {\bf initialization:} $d \gets 0$; for every $i \in \set{1,\dots,n}$  set  $id_i\gets 0$, $Q_{0}\gets ((I^{\mathrm{pub}},0,\dots,0))$.
\WHILE{goal has not been achieved}
   \STATE $d\gets d+1$; for every $i \in \set{1,\dots,n}$ set $Q_{d}\gets \emptyset$,
	  $C_i\gets \emptyset$, and $\tilde{E}_i\gets \emptyset$.
	 \STATE $L[root]=(0,\dots,0)$.
   \FOR{$i=1$ \TO $n$ }
      \FORALL{node $w$ in level $d$ s.t.~the edge $(f(w),w)$ is labeled by an action of $\varphi_i$}
			\STATE let $L(f(w))=(j_1,\dots,j_n)$ and $s'$ be the state of node $w$.
      \STATE $\tilde{E}_i\gets \tilde{E}_i \cup \set{(s',j_1,\dots,j_{i-1},j_{i+1},\dots,j_n,w)}$.
			\ENDFOR
		\STATE sort the elements of $\tilde{E}_i$, first by the public state, then by the $n-1$ ids, and then by $w$. 
		\STATE let $((s^1,j_1^1,\dots,j^1_{i-1},j^1_{i+1},\dots,j_n^1,w^1),$ 
			$\dots,(s^t,j_1^t,\dots,j^t_{i-1},j^t_{i+1},\dots,j_n^t,w^t))$ be the sorted elements of $\tilde{E}_i$.
		\FOR{$u=1$ \TO $t$}
			   \IF{ $u=1$ \OR $s^{u-1} \neq s^u $ \OR $(j_1^{u-1},\dots,j^{u-1}_{i-1},j^{u-1}_{i+1},\dots,j^{u-1}_n) \neq
				(j_1^u,\dots,j^u_{i-1},j^u_{i+1},\dots,j_n^u)$}
			    \STATE $id_i\gets id_i+1$.
					\STATE $C_i\gets C_i \cup \set{(s^u,j_1^u,\dots,j^u_{i-1},id_i,j^u_{i+1},\dots,j_n^u)}$.
				 \ENDIF
				\STATE $L(w^u)\gets (j_1^u,\dots,j^u_{i-1},id_i,j^u_{i+1},\dots,j_n^u)$.
		\ENDFOR
		\STATE send $C_i$ on behalf of $\varphi_i$ to all agents (where the elements of $C_i$ are sent according to some canonical order).
		\STATE for every $j \in \set{1,\dots,n}$ set $Q_{d}\gets Q_{d} \cup C_i$.
	\ENDFOR
	\IF{ the state $s$ in some element in $Q_d$ satisfies the goal}
				\STATE execute $sol \gets $ {\bf sim-recover-solution},   output $sol$,  and halt.
  \ENDIF 
\ENDWHILE
\end{algorithmic}
\end{algorithm}

\begin{algorithm*}
\caption{sim-recover-solution}
    \label{sim:recover-solution}
\begin{algorithmic}[1]
\STATE let $s_d,j_{1,d},\dots,j_{n,d}$ be the first element in $Q_{d}$ that satisfies the goal
and $W_d$ be all nodes $w$ in level $d$ whose public state is $s$ and whose label $L(w)$ is
$j_{1,d},\dots,j_{n,d}$.
\FOR{$d'=d$ {\bf downto } 1 }
\STATE let $F_{d'-1}\gets \set{f(w)|w \in W_{d'}}$.
\STATE let $s_{d'-1},j_{1,d'-1},\dots,j_{n,d'-1}$ be the first element in
$\set{(s(v),L(v)|v \in F_{d'-1}}$ (where $s(v)$ is the public state in the node $v$). 
\STATE let $a_{d'}$ be the lexicographically first action labeling an edge from a node
$v \in F_{d'-1}$ such that  $s(v)=s_{d'-1}$ and  $L(v)=j_{1,d'-1},\dots,j_{n,d'-1}$
to a node in $W_{d'}$.
\STATE let $W_{d'-1}$ be all nodes in $F_{d'-1}$ such that  $s(v)=s_{d'-1}$,   $L(v)=j_{1,d'-1},\dots,j_{n,d'-1}$ and there exists an edge from them to a node in $W_{d'}$ labeled by the action $a_{d'}$.
\STATE Send the message $s_{d'-1},j_{1,d'-1},\dots,j_{n,d'-1}$  and $a_{d'}$.
\ENDFOR
\RETURN $a_1,\dots,a_d$.
\end{algorithmic}
\end{algorithm*}

\subsection{Merging More Nodes}
In PST-ID Forward Search, an agent merged two nodes if they were in the same level, they had the same public state, and the ids of the other agents were the same. The simple case when two nodes were merged is if they had the same parent and there were two sequences of actions ending with the same public state (if the last action in these sequences is the same, then they are already merged in the PPT). There are somewhat more complicated scenario when nodes are merged. For example, suppose that in some public state $s$ and private state 
$ps$ in level $d$, agent $\varphi_i$ can apply two sequences of public actions $a_1,a_2$ and $a_3,a_4$, and both sequences result in the same public state $s'$. Then, the resulting two nodes are in the same level $d+2$ and they will be merged. However, suppose that also action $a_5$ is applicable in the state $s,ps$ and results in state $s'$ (in level $d+1$). The resulting node is not in the same level and the previous nodes are not merged with the new node.
As a result, the current algorithm will send two nodes that are identical in every respect, except for its id.
One key motivation for the original \smafs\ algorithm was to prevent this situation and never send two nodes
that differ only in the private state of the sending agent.

There is a simple (though probably inefficient) way of overcoming this. For this observe that, under the 
assumption that an agent will never send two states that differ only in its own id, the only way two
states $s',s''$ generated by an agent $\varphi_i$ can be identical is if they have a common ancestor $s$, 
and $s'$ and $s''$ were generated by applying actions of $\varphi_i$ only. As in the above example, these
could be sequences containing different numbers of public actions, and hence at different levels of the PST.
However, once a public action is applied by some other agent $\varphi_j$, its id will change, and hence $s'$ and $s''$ will
differ on $\varphi_j$'s id. Given this observation, it is easy to modify PST-ID FS to have the property that
an agent $\varphi_i$ never sends two nodes that are identical in all but (possibly) its id, which we call
PST-ID-E Forward Search. Whereas in PST-ID FS an agent will send each state obtained by applying
exactly one public action, in PST-ID-E, the agent expands the entire local sub-tree below a node in its open list.
That is, it will consider state reachable by applying more than one (of its) public actions. This could be a large 
sub-tree, of course, but under the assumption that all variables have finite-domains, it is finite and with appropriate
book keeping (maintaining a closed list) can be constructed in finite time. Thus, the only change is in line 8 of~\cref{alg:less-simple-search},
where the new line is
\begin{quote}
{\bf for each} sequence $a_1,\dots,a_\ell$ of actions of $\varphi_i$  applicable from $s,ps$, where $a_1\dots,a_{\ell-1}$ are {\em public or private} and $a_\ell$ is public {\bf do}.
\end{quote}

\begin{claim}
\label{c:psd-id-e}
The PST-ID-E Forward Search algorithm is a PST-indistinguishable secure algorithm.
\end{claim}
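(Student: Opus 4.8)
The plan is to prove the claim exactly as \cref{c:less-simple} was proved: by exhibiting a simulator that, given only the PST, reproduces every message broadcast during a run of PST-ID-E Forward Search, and then arguing by induction on the round number $d$ that the simulator's output agrees with the real execution. Since PST-ID-E differs from \cref{alg:less-simple-search} only in its expansion step (line~8, where intermediate actions may now be public as well as private), I would build the simulator by making the mirror-image modification to \cref{sim:less-simple-search}. Concretely, in round $d$, when it is agent $\varphi_i$'s turn, instead of collecting the PST children $w$ whose incoming edge $(f(w),w)$ is labeled by a $\varphi_i$-action, the simulator collects, for every PST node $v$ carrying a round-$(d-1)$ label, all PST nodes $w$ reachable from $v$ by a \emph{monochromatic $\varphi_i$-path} --- a path all of whose edges are labeled by public actions of $\varphi_i$. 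The label propagation, the sorting of $\tilde E_i$ by (public state, the $n-1$ other ids, and $w$), the merging, and the id assignment are left exactly as in \cref{sim:less-simple-search}.

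The correctness of this simulator rests on one structural fact together with an invariant carried through the induction. The structural fact is that, because $\PST(\Pi)$ is by definition the public projection of the real search tree, the set of public states that agent $\varphi_i$ can reach from a node by a sequence of its own actions ending in a public action (the new line~8) is exactly the set of public states labeling the PST nodes reachable by monochromatic $\varphi_i$-paths: intermediate private actions of $\varphi_i$ are already absorbed into single PST edges, and chaining several public $\varphi_i$-actions corresponds to following several such edges in a row. The invariant I would maintain is that after round $d$ the simulator's $Q_d$ (public states together with their $n$-tuples of ids) coincides with the algorithm's $Q_d$, and that for each id $j$ assigned to $\varphi_i$ the PST nodes the simulator labels with $j$ in coordinate $i$ correspond precisely to the private states the algorithm has stored in $PS_i[j]$. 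Granting the invariant for round $d-1$, the structural fact shows that the entries the algorithm places in $E_i$ and those the simulator places in $\tilde E_i$ have identical (public state, other-agent-id) signatures, so the merging step --- which pools private states under a single fresh id exactly when these signatures coincide --- assigns the same ids in both, re-establishing the invariant and producing identical $C_i$ messages.

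The main obstacle, and the place where the design of PST-ID-E is actually used, is verifying that the merging performed by the algorithm is faithfully tracked by the simulator even though the generated nodes can now sit at different depths of the PST and can have different PST-fathers. In \cref{alg:less-simple-search} all nodes merged within a round were siblings one public action below a common parent, whereas here $\varphi_i$ may reach the same public state by monochromatic paths of different lengths. I would lean on the observation made before the claim: once any other agent $\varphi_j$ has acted on a path, $\varphi_j$'s id changes, so two generated states can agree on their public state and on all of the other agents' ids only if they were obtained from a common $Q_{d-1}$ element by applying actions of $\varphi_i$ alone. Hence the (public state, other-agent-id) signature that governs merging is determined entirely by the PST together with the round-$(d-1)$ labels, which is precisely the information the simulator has; this is what lets the simulator decide which nodes to merge without ever seeing a private state. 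Finally, the recovery phase is handled by the obvious adaptation of \cref{sim:recover-solution} to the ``leads to'' relation with public intermediate actions, and the same backward induction shows it reproduces the output of \cref{alg:recover-solution}.
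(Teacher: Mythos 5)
Your proposal is correct and follows essentially the same route as the paper: the paper's proof reuses the simulator of \cref{c:less-simple} after adding to the PST an edge from each node to every descendant reachable via public actions of a single agent, which is exactly your rule of collecting nodes reachable by monochromatic $\varphi_i$-paths (the two formulations are interchangeable). The invariant and the merging argument based on the id-change observation that you spell out are precisely what the paper compresses into ``follows immediately from the proof of \cref{c:less-simple}.''
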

\begin{proof}
This follows immediately from the proof of~\cref{c:less-simple} using the following observation:
Take the PST, and add to it additional edges between every node and all its descendants that
are reachable using public actions of the same agent only. Now, use the simulator for PST-ID FS
on this modified tree.
\end{proof}

Note that given the modified tree in the proof above, it is possible to recover the original ordering by 
simply taking into account the number of public actions that were applied in the path from the initial
state to the current state.

\begin{claim}
\label{c:psd-id-e-differ}
In the PST-ID-E Forward Search algorithm, an agent never sends two states $s,s'$ that differ only in its own id.
\end{claim}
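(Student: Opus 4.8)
The plan is to argue by contradiction, reducing the across-turn case to the within-turn merging that \cref{alg:less-simple-search} already performs. First I would record the easy direction: within a single turn of $\varphi_i$ at a fixed level $d$, the sort-and-merge loop of \cref{alg:less-simple-search} (with the modified expansion step) groups together any two elements of $E_i$ that agree on the public state and on the $n-1$ other ids, and assigns them one new id. Hence no two nodes differing only in $\varphi_i$'s own id are ever sent within one turn, so the whole content of the claim is that this cannot happen across two different turns of $\varphi_i$ either. Suppose, for contradiction, that $\varphi_i$ sends $w$ at level $d$ and $w'$ at level $d'<d$ with the same public state $s$, the same ids $(j_k)_{k\neq i}$ for all other agents, but distinct own ids.

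The key step is a structural lemma: such $w$ and $w'$ must both be reachable, by sequences of $\varphi_i$'s actions only, from a common logical node $u$ -- that is, from ancestors $u_w$ of $w$ and $u_{w'}$ of $w'$ that carry the same public state and the same full id-vector and are therefore merged into one node. I would prove this by tracing each of $w,w'$ up the maximal suffix of its root-path consisting solely of $\varphi_i$-turns, reaching its topmost such ancestor $u_w$ (resp.\ $u_{w'}$), whose incoming edge is labelled by some other agent $\varphi_k$. Since $\varphi_i$-turns do not change the other agents' ids, $u_w$ and $u_{w'}$ inherit the same ids $(j_k)_{k\neq i}$ as $w$ and $w'$. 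The crucial observation (call it the id-provenance invariant) is that each id value of an agent $\varphi_k$ is created exactly once -- at a single merge-event -- and is thereafter only inherited by descendants until $\varphi_k$ next acts; consequently the agreement of $w$ and $w'$ on every $\varphi_k$-id, $k\neq i$, forces $u_w$ and $u_{w'}$ to originate from the same creation events, and hence, being merged there, to share the same public state and the same full id-vector. In particular they share $\varphi_i$'s id, so a single private-state set $PS_i$ is stored for them, and they sit at the same level.

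Once the common logical node $u=u_w=u_{w'}$ is established, the contradiction is immediate from the design of PST-ID-E: because that algorithm expands, in one turn, the \emph{entire} local subtree of $\varphi_i$ below $u$ (the modification of line~8), both $w$ and $w'$ are produced as children of $u$ at the single level following $u$, land in the same $E_i$, and are merged to one id by the within-turn argument -- contradicting that their own ids differ. The main obstacle is the apparent circularity of the structural lemma: the argument that matching other-ids pin down a common ancestor silently assumes that the \emph{other} agents also never emit two nodes differing only in their own id, since otherwise equal ids would not certify a common provenance. I would dissolve this by carrying all of the above inside a single strong induction on the level, so that when analysing $\varphi_i$'s turn at level $d$ the claim is already available for every agent at all levels $<d$; as the ancestors $u_w,u_{w'}$ and the relevant $\varphi_k$-creation events all lie strictly below level $d$, the induction hypothesis applies exactly where it is needed. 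Verifying that the id-provenance invariant is preserved by the merge step -- where one id value may correspond to several tree-nodes with different parents -- is the delicate bookkeeping that the full proof must check.
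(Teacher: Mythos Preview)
Your proposal is correct and rests on the same core insight as the paper: two messages sent by $\varphi_i$ that agree on the public state and on every other agent's id must descend from a common ancestor via $\varphi_i$'s actions only, and PST-ID-E's full-subtree expansion then merges them. The organization differs. The paper argues by a direct case split with a \emph{forward} trace from the root: if no common $\varphi_i$-ancestor exists, then at the first level where the two paths' sent messages diverge the sending agent mints a fresh (hence different) id, and this difference is inherited at every subsequent level, so some non-$\varphi_i$ id must differ at $s,s'$. You instead prove the contrapositive by a \emph{backward} trace to the topmost $\varphi_i$-ancestors $u_w,u_{w'}$ and invoke your id-provenance invariant to force them to coincide. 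Your explicit strong induction on the level---needed so that the uniqueness property for the \emph{other} agents is already available when you analyze $\varphi_i$'s turn---is a rigor point the paper's short sketch leaves implicit; both arguments require it. The paper's forward trace is terser once one accepts the propagation step; your backward formulation isolates a clean structural lemma but, as you note, still owes the bookkeeping that matching ids for all $k\neq i$ pin down a single creation event (in particular ruling out that $u_w$ and $u_{w'}$ were created by different agents $\varphi_{k_1}\neq\varphi_{k_2}$, which follows from the strict monotonicity of the id counters).
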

\begin{proof}
Consider two states $s,s'$ sent by an agent $\varphi_i$ during the run of the algorithm. Let the level of a state denote
the number of times (plus 1) a public action was applied in the path to this state by an agent such that this agent did not apply
the previous public action on the path.
First, assume that $s,s'$ have a common ancestor such that all actions on the paths from this ancestor to
$s$ and $s'$ are of the same agent $\varphi_i$. In this case, if they are identical in all other respects, 
an id that contains both their private states is formed, and only one state is sent.
Suppose that $s,s'$ do not have such ancestor.  Consider the sequence of states sent by agents on the paths from the root to $s$ and $s'$.
At some points, these states differ, and hence the id of the agent that sent the states will differ too. But from this point on,
the ids of all sending agent must change.
\end{proof}

\subsection{Heuristic Search}
So far, the algorithms we described expanded nodes in breadth-first manner, and followed some canonical
ordering within each level. PST-ID-E also fits this view, when levels are defined such that the level increases
only when a public action is applied by an agent who did not apply the last public action.
However, the privacy guarantees do not rest on this property. 
In principle, the PST can be traversed in any order, and all the above results are correct provided the
traversal ordering is a function of the PST only. Thus, for example, any heuristic search algorithm can be used, provided the heuristic
depends on the history of the public part of the state only, or on the current public state.
This follows trivially from the fact that a simulator that has access to the PST can simulate any such ordering.

\subsection{\smafs}

We are now ready to describe a PST-indistinguishable secure algorithm that is essentially 
a synchronous, breadth-first version of  \smafs~\cite{Brafman15}. 
\smafs\ is similar to PST-ID Forward Search (i.e., a message is sent after the application of a public action),
except that an agent never sends two states that differ only in its own private state -- in our case, its own id. 
The PST-ID-E algorithm has this property, but requires that an agent first explore its entire sub-tree.

To prevent resending identical states (modulo its own id), in \smafs\ the agent must maintain a list of states sent so far. Whenever it wishes to send a state $s$ with local state $ps$, it first checks if the state $s$ was sent before. If it was, it simply updates the id associated with $s$ to include $ps$.

\begin{figure}[ht]
\centerline{
\includegraphics[width=0.5\textwidth,height=7cm]{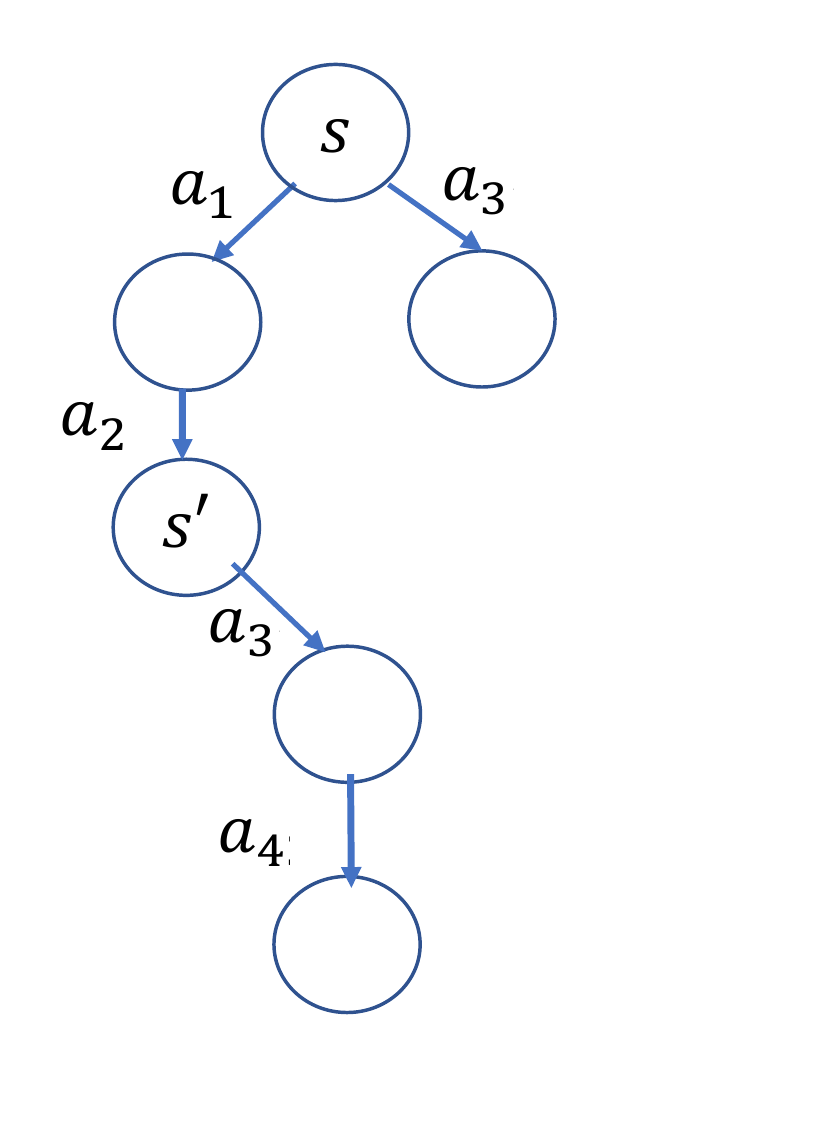}}
\caption{\label{fig:example1}An example for \smafs.} 
\end{figure}

However, this change alone is insufficient to maintain completeness. See Figure~\ref{fig:example1} for illustration of the following example. Consider some state $s$ that is
being expanded by $\varphi_i$. Suppose that the non-private part of the state is identical in  $s'=a_2(a_1(s))$ and $s$, but the local state is different. Here $a_1,a_2$ are  public actions of $\varphi_i$ that only change the private state of $\varphi_i$.
Let $a_3$ be a public action of another agent $\varphi_j$ and $a_4$ an action of $\varphi_i$. 
We claim that $\varphi_i$ may never generate $a_4(a_3(s'))$, although, as we shall see, it should. 
To see this, note that $\varphi_i$ will receive $a_3(s)$ from $\varphi_j$ and will  expand it
before it generates $s'=a_2(a_1(s))$. 
Now, suppose that $a_4$ cannot be applied in $a_3(s)$ because of $\varphi_i$'s local state, but it can be applied in $a_3(s')$. Eventually, $\varphi_i$ will generate
$s'=a_2(a_1(s))$. However, it will not send it to $\varphi_j$. It will simply update the id associated with
$s$ to include the local state of $s'$. Since $s$ was already expanded, it will not attempt to re-expand it, and will miss the state $a_4(a_3(s'))$.


To address this issue, \smafs\ must re-expand states previously expanded when their id is modified.
Specifically, in the above example, when we modify the id of $a_2(a_1(s))$, \smafs\ will 
add $a_3(s')$ (with the appropriate ids) to a local queue and later see that $a_4$ is applicable from this state. 

\begin{algorithm}
\caption{\smafs}
    \label{alg:smafs}
\begin{algorithmic}[1]  
\STATE {\bf initialization:} $d \gets 0$; $Q_{0}\gets \set{(I^{\mathrm{pub}},0,\dots,0)}$; for every $i \in \set{1,\dots,n}$ agent $\varphi_i$ sets  $id_i\gets 0$, $PS_{i}[0]\gets \set{I_i}$,
and $LQ_{i,d'}\gets\emptyset$ for every $d'$.
\WHILE{goal has not been achieved}
   \STATE $d\gets d+1$; for every $i \in \set{1,\dots,n}$ agent $\varphi_i$ sets $Q_{d}\gets \emptyset$, $C_{i,d}\gets\emptyset$, 
	and $E_i\gets \emptyset$.
   \FOR{$i=1$ \TO $n$ }
	    \STATE agent $\varphi_i$ does the following:
      \FORALL{$(s,j_1,\dots,j_n)\in Q_{d-1} \cup LQ_{i,d-1}$}
            \FORALL{private state $ps\in PS_{i}[j_i]$ }
						\IF{$(s,j_1,\dots,j_n)$ and $ps$ where not evaluated previously by $\varphi_i$}
  	        \FORALL{sequence $a_1,\dots,a_\ell$ of actions of $\varphi_i$  applicable from $s,ps$, where $a_1\dots,a_{\ell-1}$ are private and $a_\ell$ is public }
		           \STATE  $(s',ps') \gets a_\ell(a_{\ell-1}(\cdots a_1((s,ps))))$.
		           \IF {$(s',j_1,\dots,j_{i-1},j_{i+1},\dots,j_n,ps')$ was not generated before}
							 \STATE $E_i\gets E_i \cup \set{(s',j_1,\dots,j_{i-1},j_{i+1},\dots,j_n,ps')}$.
							 \ENDIF
						\ENDFOR
					\ENDIF
	       \ENDFOR
			\ENDFOR
			\STATE agent $\varphi_i$ sorts the elements of $E_i$, first by the public state, and then by the $n-1$ ids, and then by the private state. Let $((s^1,j_1^1,\dots,j^1_{i-1},j^1_{i+1},\dots,j_n^1,ps^1)$ 
			$\dots,(s^t,j_1^t,\dots,j^t_{i-1},j^t_{i+1},\dots,j_n^t,ps^t))$ be the sorted elements of $E_i$.
			\FOR{$u=1$ \TO $t$}
			   \IF{there exist $d' < d$ and $id$ such that $(s^u,j_1^u,\dots,j^u_{i-1},id,j^u_{i+1},\dots,j_n^u) \in C_{i,d'}$ 
				}	
						\STATE update $PS_{i}[id]\gets PS_{i}[id] \cup \set{ps^u}$.
						\FORALL{$(s,j_1,\dots,j_{i-1},j_{i+1},\dots,j_n)$ s.t. $(s,j_1,\dots,j_{i-1},id,j_{i+1},\dots,j_n) \in Q_{d''}$ for some $d'' <d$}
						   \STATE \label{line:LQ} update $LQ_{i,d+(d''-d')} \gets LQ_{i,d+(d''-d')} \cup \set{(s,j_1,\dots,j_{i-1},id,j_{i+1},\dots,j_n)}$.
						\ENDFOR
			   \ELSIF{$u >1$ \AND  $s^{u-1} =s^u $ \AND $(j_1^{u-1},\dots,j^{u-1}_{i-1},j^{u-1}_{i+1},\dots,j^{u-1}_n)=
				(j_1^u,\dots,j^u_{i-1},j^u_{i+1},\dots,j_n^u)$}
				    \STATE $PS_{i}[id_i]\gets PS_{i}[id_i] \cup \set{ps^u}$. //Collects ids of similar states in a level
						\label{step:PSi}
				    \ELSE  
				    \STATE update $id_i\gets id_i+1$
						\STATE update $C_{i,d}\gets C_{i,d} \cup \set{(s^u,j_1^u,\dots,j^u_{i-1},id_i,j^u_{i+1},\dots,j_n^u)}$,
						 and $PS_{i}[id_i]\gets \set{ps^u}$.
				    \ENDIF
			\ENDFOR
			\STATE agent $\varphi_i$ sends $C_{i,d}$ to all agents (where the elements of $C_{i,d}$ are sent according to some canonical order).
		  \STATE each agent $\varphi_j$ updates: $Q_{d}\gets Q_{d} \cup C_{i,d}$.
		\ENDFOR
		\IF{the state $s$ in some element in $Q_{d}$ satisfies the goal \label{line:finds}} 
				\STATE the agents execute $sol \gets $ {\bf recover-solution},   output $sol$,  and halt.
    \ENDIF 
\ENDWHILE
\end{algorithmic}
\end{algorithm}
The pseudo-code for \smafs\ appears in~\cref{alg:smafs}. At each level we generate a number of lists of states:
The set $C_{i,d}$ contains the new states that agent $\varphi_i$ created in level $d$; these states are sent to all agents. The set $Q_d$ contains the new states created in level $d$ by some agent,
that is $Q_d=\cup_{1\leq i \leq n} C_{i,d}$.
Furthermore, the set $LQ_{i,d}$ will contain states that were generated
by $\varphi_i$, but are not being sent because a similar state was sent earlier. These lists are initially empty.
In round $d$, each agent $\varphi_i$ expands all states in $Q_{d-1}$ and in $LQ_{i,d-1}$ using any sequence
of private actions followed by a single public action. It collects all these states into $E_d$. 
This list is sorted and all its elements are processed in order. 
For each element, agent $\varphi_i$ checks if this state did not appear before in the states it created (namely, in $C_{i,d'}$ for some $d' <d$), and if
a similar state $s'$ that differs only in $\varphi_i$'s private state appeared earlier.
If the latter is the case, let $id$ denote the id of agent $\varphi_i$ in $s'$.
We now go over all states in previous $Q_t$'s that have the id $id$, and add them to an appropriate $LQ$ list.
The list selected reflects the number of public actions that were applied to reach them from $s'$.

\commentout{
\begin{algorithm}
\caption{Simulator for \smafs}
    \label{sim:smafs}
\begin{algorithmic}[1]  
\REQUIRE A PST tree $T$
\STATE Initialization: $d \gets 0$; for every $i \in \set{1,\dots,n}$  set  $id_i\gets 1$, $Q_{i,0}\gets ((I^{\mathrm{pub}},0,\dots,0))$,
$E_{i}=\{\}$.
\WHILE{goal has not been achieved}
   \STATE $d\gets d+1$; for every $i \in \set{1,\dots,n}$ set $Q_{i,d}\gets \emptyset$,
	  $C_i\gets \emptyset$, and $\tilde{E}_i\gets \emptyset$.
	 \STATE $L[root]=(0,\dots,0)$.
   \FOR{$i=1$ \TO $n$ }
      \FORALL{node $w$ in level $d$ s.t. the edge $(f(v),w)$ is labeled by an action of $\varphi_i$}
			\STATE Let $L(f(v))=(j_1,\dots,j_n)$ and $s'$ be the state of node $w$.
      \STATE $\tilde{E}_i\gets \tilde{E}_i \cup \set{(s',j_1,\dots,j_{i-1},j_{i+1},\dots,j_n,w)}$.
			\ENDFOR
		\STATE Sort the elements of $\tilde{E}_i$, first by the public state, then by the $n-1$ ids, and then by $w$. 
		\STATE Let $((s^1,j_1^1,\dots,j^1_{i-1},j^1_{i+1},\dots,j_n^1,w^1),$ 
			$\dots,(s^t,j_1^t,\dots,j^t_{i-1},j^t_{i+1},\dots,j_n^t,w^t))$ be the sorted elements of $\tilde{E}_i$.
		\FOR{$u=1$ \TO $t$}
			   \IF{ $u=1$ \OR $s^{u-1} \neq s^u $ \OR $(j_1^{u-1},\dots,j^{u-1}_{i-1},j^{u-1}_{i+1},\dots,j^{u-1}_n) \neq
				(j_1^u,\dots,j^u_{i-1},j^u_{i+1},\dots,j_n^u)$}
			    \STATE $id_i\gets id_i+1$.
					\STATE $C_{i,d}\gets C_{i,d} \cup \set{(s^u,j_1^u,\dots,j^u_{i-1},id_i,j^u_{i+1},\dots,j_n^u)}$.
				 \ENDIF
				\STATE $L(w^u)\gets (j_1^u,\dots,j^u_{i-1},id_i,j^u_{i+1},\dots,j_n^u)$.
		\ENDFOR
		\STATE Send $C_i$ on behalf of $\varphi_i$ to all agents (where the elements of $C_i$ are sent according to some canonical order).
		\STATE For every $j \in \set{1,\dots,n}$ set $Q_{j,d}\gets Q_{j,d} \cup C_i$.
	\ENDFOR
	\IF{ the state $s$ in some element in $Q_{1,d}$ satisfies the goal}
				\STATE Execute $sol \gets $ {\bf sim-recover-solution},   output $sol$,  and halt.
  \ENDIF 
\ENDWHILE
\end{algorithmic}
\end{algorithm}
}

Observe that \smafs\ enjoys the property that an agent $\varphi_i$ will never send two states that differ only in its
own id.

\begin{algorithm}
\caption{Simulator for \cref{alg:smafs} -- \smafs}
    \label{sim:smafs}
\begin{algorithmic}[1]  
\REQUIRE A PST tree $T$
\STATE {\bf initialization:} $d \gets 0$; for every $i \in \set{1,\dots,n}$  set  $id_i\gets 1$, $Q_{0}\gets ((I^{\mathrm{pub}},0,\dots,0))$.
\WHILE{goal has not been achieved}
   \STATE $d\gets d+1$; for every $i \in \set{1,\dots,n}$ set $Q_{d}\gets \emptyset$,
	  $C_{i,d}\gets \emptyset$, and $\tilde{E}_i\gets \emptyset$.
	 \STATE $L[root]=(0,\dots,0)$.
   \FOR{$i=1$ \TO $n$ }
      \FORALL{node $w$ in level $d$ s.t.~the edge $(f(w),w)$ is labeled by an action of $\varphi_i$}
			\STATE let $L(f(w))=(j_1,\dots,j_n)$ and $s'$ be the state of node $w$.
      \STATE $\tilde{E}_i\gets \tilde{E}_i \cup \set{(s',j_1,\dots,j_{i-1},j_{i+1},\dots,j_n,w)}$.
			\ENDFOR
		\STATE sort the elements of $\tilde{E}_i$, first by the public state, then by the $n-1$ ids, and then by $w$. 
		\STATE let $((s^1,j_1^1,\dots,j^1_{i-1},j^1_{i+1},\dots,j_n^1,w^1),$ 
			$\dots,(s^t,j_1^t,\dots,j^t_{i-1},j^t_{i+1},\dots,j_n^t,w^t))$ be the sorted elements of $\tilde{E}_i$.
		\FOR{$u=1$ \TO $t$}
		 \IF{ there exists $d'< d$ and $id$ such that $(s^u,j_1^u,\dots,j^u_{i-1},id,j^u_{i+1},\dots,j_n^u) \in C_{i,d'}$}
				\STATE $L(w^u)\gets (j_1^u,\dots,j^u_{i-1},id,j^u_{i+1},\dots,j_n^u)$.
				\ELSE
			   \IF{ $u=1$ \OR $s^{u-1} \neq s^u $ \OR $(j_1^{u-1},\dots,j^{u-1}_{i-1},j^{u-1}_{i+1},\dots,j^{u-1}_n) \neq
				(j_1^u,\dots,j^u_{i-1},j^u_{i+1},\dots,j_n^u)$}
			    \STATE $id_i\gets id_i+1$.
					\STATE $C_{i,d}\gets C_{i,d} \cup \set{(s^u,j_1^u,\dots,j^u_{i-1},id_i,j^u_{i+1},\dots,j_n^u)}$.
				 \ENDIF
				\STATE $L(w^u)\gets (j_1^u,\dots,j^u_{i-1},id_i,j^u_{i+1},\dots,j_n^u)$.
			\ENDIF
		\ENDFOR
		\STATE send $C_{i,d}$ on behalf of $\varphi_i$ to all agents (where the elements of $C_{i,d}$ are sent according to some canonical order).
		\STATE for every $j \in \set{1,\dots,n}$ set $Q_{d}\gets Q_{d} \cup C_{i,d}$.
	\ENDFOR
	\IF{ the state $s$ in some element in $Q_{d}$ satisfies the goal}
				\STATE execute $sol \gets $ {\bf sim-recover-solution},   output $sol$,  and halt.
  \ENDIF 
\ENDWHILE
\end{algorithmic}
\end{algorithm}

\bibliography{cites}
\bibliographystyle{theapa}

\end{document}